\newtheorem{theorem}{Theorem}[section]
\newtheorem{proposition}[theorem]{Proposition}
\newtheorem{definition}{Definition}[section]
\newcommand\independent{\protect\mathpalette{\protect\independenT}{\perp}}
\def\independenT#1#2{\mathrel{\rlap{$#1#2$}\mkern2mu{#1#2}}}
\newcommand{\Wb}{\boldsymbol{W}}
\newcommand{\Xb}{\boldsymbol{X}}
\begin{document}

\twocolumn[

\aistatstitle{Knockoffs for the Mass: New Feature Importance Statistics with False Discovery Guarantees}

\aistatsauthor{Jaime Roquero Gimenez \And Amirata Ghorbani \And James Zou}

\aistatsaddress{ Department of Statistics\\
  Stanford University\\
  Stanford, CA 94305 \\
  \texttt{roquero@stanford.edu} \\ 
  \And  Department of Electrical Engineering \\
  Stanford University\\
  Stanford, CA 94305 \\
  \texttt{amiratag@stanford.edu} 
  \And Department of Biomedical \\ Data Science \\
  Stanford University\\
  Stanford, CA 94305 \\
  \texttt{jamesz@stanford.edu}\\ } ]

\begin{abstract}
An important problem in machine learning and statistics is to identify features that causally affect the outcome. This is often impossible to do from purely observational data, and a natural relaxation is to identify features that are correlated with the outcome even conditioned on all other observed features. For example, we want to identify that smoking really is correlated with cancer conditioned on demographics. The knockoff procedure is a recent breakthrough in statistics that, in theory, can identify truly correlated features while guaranteeing that false discovery rate is controlled. The idea is to create synthetic data—knockoffs—that capture correlations among the features. However, there are substantial computational and practical challenges to generating and using knockoffs. This paper makes several key advances that enable knockoff application to be more efficient and powerful. We develop an efficient algorithm to generate valid knockoffs from Bayesian Networks. Then we systematically evaluate knockoff test statistics and develop new statistics with improved power. The paper combines new mathematical guarantees with systematic experiments on real and synthetic data. 
\end{abstract}

\section{INTRODUCTION}

Identifying important features is an ubiquitous problem in machine learning and statistics. In relatively simple settings, the importance of a given feature is measured via the fitted parameters of some model. Consider Generalized Linear Models (GLM) for example, users often add a LASSO penalty \citep{LASSO} to promote sparsity in the coefficients, and subsequently select those features with non-zero coefficients. Step-wise procedures where we sequentially modify the model are another way of doing feature selection \citep{mallowscp,aicModel,bic}. 

These standard methods are all plagued by correlations between the features: a feature that is not really relevant for the outcome, in a precise sense we will define in Section~2, can be selected by LASSO or Step-wise procedure, because it is correlated with relevant features. The feature selection problem becomes even more difficult in more complex settings where we may not have a clean parametric model relating input features to the labels. Moreover, in these settings we usually lack statistical guarantees on the validity of the selected features. Finally, even procedures with statistical guarantees usually depend on having \emph{valid $p$-values}, which are based on a correct modeling of $Y|X$ and (sometimes) assume some asymptotic regime. However, there are many common settings where these assumptions fail and we cannot perform inference based on those $p$-values \citep{sur2018modern}.

A powerful new approach called \emph{Model-X knockoff procedure} \citep{KN2} has recently emerged to deal with these issues. This method introduces a new paradigm: we no longer assume any model for the distribution of $Y|X$ in order to do feature selection (and therefore do not compute $p$-values), but we assume that we have full knowledge of the feature distribution $P^X$ -- or at least we can accurately model it. This knowledge of the ground truth $P^X$ allows us to sample new \emph{knockoff} variables $\tilde{X}$ satisfying some precise distributional conditions. Although we make no assumption on $Y|X$, we can run any procedure on the data set extended with the knockoffs and perform feature selection while controlling the False Discovery Rate (FDR). Statistical guarantees no longer rely on \emph{valid $p$-values} whenever our model for $Y|X$ is correct, but exclusively on having \emph{valid knockoffs}, obtained whenever we know $P^X$ (or our model for $P^X$ is correct).

There are two main obstacles to using knockoffs in practice (i.e. ``for the mass''): 1) tractably generating valid knockoffs and 2) once we have generated knockoffs, computing powerful test statistics. Current tractable methods for generating knockoffs are restricted to the settings where $X$ is modeled as a multivariate Gaussian \citep{KN2} or as the set of observed nodes in a Hidden Markov Model (HMM) \citep{KN3}. Even though knockoff methods enjoy some robustness properties when approximating $P^X$ \citep{KN4}, the guarantee on FDR control breaks down even in very simple settings if these approximations turn out to be too crude, as we demonstrate in this paper. Sequential conditional independent pairs algorithm (SCIP) \citep{KN2} is a universal knockoff sampling scheme but is computationally intractable as soon as we model $P^X$ by more complex distributions. Constructing tractable knockoff sampling procedures for more flexible classes of distributions used to model $P^X$ is essential to improve the validity of knockoffs, which is always subject to the quality of the approximation. Regarding test statistics, current methods \citep{KN1,KN2,KN3,KN4} focus on LASSO-based statistics to obtain proxies for the importance of any given feature: such methods need to fit a GLM to the data, which is too restrictive in many supervised learning tasks. 

\paragraph{Our Contributions.} First, we formulate a novel tractable procedure for sampling knockoffs in settings where features can be modeled as the observed variables in a Bayesian Network. This allows for great flexibility when modeling the feature distribution $P^X$. We show that this procedure is different from SCIP, which is the current state-of-the-art method to sample knockoffs. We construct valid knockoffs in settings where previous knockoff sampling procedures assumed a very restrictive model for $P^X$ and therefore failed to control FDR, and provide a unified framework for several different sampling procedures. In addition, we systematically evaluate and compare several nonlinear knockoff test statistics which can be applied in general supervised learning problems. We develop a new statistic, \emph{Swap Integral}, which has significantly better power than other methods. These two advances enable us to perform feature selection using black-box classifiers to represent $Y|X$ while still retaining statistical guarantees under correct modeling of $P^X$.

\section{KNOCKOFF BACKGROUND}

We begin by introducing the usual setting of feature selection procedures. We consider the data as a sequence of i.i.d. samples from some unknown joint distribution: $(X_{i1},\dots,X_{id},Y_i) \sim P^{XY}$, $i = 1,\dots,n$. We then define the set of null features $\mathcal{H}_0\subset \{1,\dots,d\}$ by  $j\in \mathcal{H}_0$ if and only if $X_j \independent Y | \Xb_{-j}$ (where the $-j$ subscript indicates all variables except the $j$th). The non-null features, also called alternatives, are important because they capture the truly influential effects: each non-null feature is correlated with the label $Y$ even conditioned on rest of the features. Our goal is to identify these non-null features. Running the knockoff procedure gives us a selected set $\hat{\mathcal{S}}\subset \{1,\dots,d\}$, while controlling for False Discovery Rate (FDR), which stands for the expected rate of false discoveries: $FDR = \mathbb{E}\Big[ \frac{|\hat{\mathcal{S}}\cap \mathcal{H}_0|}{|\hat{\mathcal{S}}|\vee 1} \Big]$.

Assuming we know the ground truth for the distribution $P^X$ (we'll come back to the realistic setting where we only have samples of $X$), the first step of the knockoff procedure is to obtain a \emph{knockoff} sample $\tilde{X}$ that satisfies the following conditions: 

\begin{definition}[Knockoff sample]
A knockoff sample $\tilde{X}$ of a d-dimensional random variable $X$ is a d-dimensional random variable such that two properties are satisfied:
\begin{itemize}[noitemsep, topsep=0pt]
\item Conditional independence: \hspace{3mm}$\tilde{X} \independent Y | X$ 
\item Exchangeability :  
\[[X,\tilde{X}]_{swap(S)}\; \stackrel{d}{=}\; [X,\tilde{X}]\qquad  \forall S \subset \{1,\dots,d\}
\]
\end{itemize}
where $\stackrel{d}{=}$ stands for equality in distribution and the notation $[X,\tilde{X}]_{swap(S)}$ refers to the vector where the original $j$th feature and the $j$th knockoff feature have been transposed whenever $j\in S$.
\end{definition}

The first condition is immediately satisfied as long as knockoffs are sampled conditioned on the sample $X$ without considering any information about $Y$, which will always be the case in our sampling methods. More generally we say that any $f:\mathbb{R}^d \times \mathbb{R}^d \rightarrow \mathbb{R}$  such that $f([x,\tilde{x}]_{swap(S)})=f([x,\tilde{x}])$ satisfies the exchangeability property.

Once we have the knockoff sample $\tilde{X}$, the next step of the procedure constructs \emph{feature statistics} \mbox{$\Wb=(W_1,\dots,W_d)$}, such that a high value for $W_j \in \mathbb{R}$ is evidence that the $j$th feature is non-null. Feature statistics described by \citet{KN2} depend only on $[X,\tilde{X}] \in \mathbb{R}^{N\times 2d},Y\in \mathbb{R}^N$ such that for each $j \in \{1,\dots,d\}$ we can write $W_j = w_j([X,\tilde{X}],Y) $ for some function $w_j$. The only restriction these statistics must satisfy is the \emph{flip-sign property}: swapping the $j$th feature and its corresponding knockoff feature should flip the sign of the statistic $W_j$ while leaving other feature statistics unchanged. More formally, for a subset $S\subset \{1,\dots,d\}$ of features, denoting $[X,\tilde{X}]_{swap(S)}$ the data matrix where the original $j$th variable and its corresponding knockoff have been transposed whenever $j\in S$, we require that $w_j$ satisfies:
\vspace{-8mm}

\begin{equation*}
  w_j([X,\tilde{X}]_{swap(S)},Y)  =\begin{cases}
   - w_j([X,\tilde{X}],Y) , & \hspace{-1mm} \text{if $j\in S$}.\\
   w_j([X,\tilde{X}],Y) , & \hspace{-5mm} \text{otherwise}.
  \end{cases}
\end{equation*}

\vspace{-2mm}

As suggested by \citet{KN2}, the choice of feature statistics can be done in two steps: first, find a statistic $Z=Z([X,\tilde{X}],Y)=(Z_1,\dots,Z_d,\tilde{Z}_1, \dots,\tilde{Z}_d)\in \mathbb{R}^{2d}$ where each coordinate corresponds to the ``importance'' --- hence we will call them \emph{importance scores} ---  of the corresponding feature (either original or knockoff). For example, $Z_j$ would be the absolute value of the regression coefficient of the $j$th feature. 

After obtaining the importance score for the original and knockoff feature, we compute the feature statistic $W_j = Z_j-\tilde{Z}_j$. The intuition is that importance scores of knockoffs serve as a control, such that larger importance score of the $j$th feature compared to that of its knockoff implies larger $W_j$ (and therefore is evidence against the null). Given that feature statistics of the null features behave ``symmetrically'' around 0 by the flip-sign property, the final selection procedure compares, at a given threshold $q>0$ the number of features such that $W_j>q$ vs. the number of those such that $W_j<-q$ \citep{KN2}. By keeping this ratio ``under control'' for a given target FDR that we fix in advance, we obtain an adaptive procedure that selects features whose value $W_j$ is above some adaptive threshold. FDR control is guaranteed with this procedure, as long as the knockoffs are sampled correctly (i.e., satisfying the two conditions), regardless of the choices made to construct the feature statistics $W$ \citep{KN2}.

The power of the knockoff procedure is that it makes \emph{no assumptions} on the relationship between $X$ and $Y$; however, it requires full knowledge of the feature distribution $P^X$ which is often not known and needs to modeled. It is also important to notice that for any model we use for $P^X$, we are required to know how to sample knockoffs from that model distribution. Current classes of distributions having a tractable knockoff sampling method have limited expressivity, and we can find simple settings where using models limited to these classes fails to provide valid knockoffs, and consequently the FDR is not controlled. We provide an example of such setting in Section~\ref{subsection:needmixture}. However, the knockoff procedure is robust to small errors in the estimation of $P^X$ \citep{KN4}. Our method for sampling knockoffs from a Bayesian network allows us to fit a better distribution to the data resulting in a better control of FDR up to some error term with respect to the FDR we target (under the assumption that our model distribution of $P^X$ is perfect, this procedure controls FDR at the target FDR). We discuss in Appendix~\ref{proof:robustness} the reasons why we expect our method to be robust to model misspecification.

Note that setting $\tilde{X} = X$ satisfies the knockoff properties, but no discovery would be made since all of the $W_j$ would be zero. Therefore, in addition to generating valid knockoffs in order to control FDR, it is also important to consider the statistical power of the procedure, which is the probability that a true alternative is selected. This depends not only on the knockoff sampling procedure, but also on the choice of the importance scores. We discuss power in detail in Sections 4 and 5. 

\section{GENERATING KNOCKOFFS FROM LATENT VARIABLE MODELS}
\label{section:GeneratingKnockoffs}

We first show how to generate knockoff samples from a Gaussian mixture model, which is already a new contribution, as a warm-up to our general procedure to generating knockoffs from latent variable models. This section relates just to the first step of the \emph{Model-X} knockoff procedure. We will consider different classes of distributions used to model $P^X$; we do not consider the response $Y$ for this step.

\paragraph{Gaussian Mixture Knockoffs} The setup:
\vspace{-2mm}
\begin{itemize}[leftmargin=*]
\setlength\itemsep{-0.2em}
\item $K$ is a categorical latent variable taking values in $\{1,\dots,l\}$, where $l\geq 2$ is the number of mixtures.
\item $\mu_k \in \mathbb{R}^d$ and $\Sigma_k \in \mathbb{R}^{d \times d}$ are the mean and variances for component $k$, that are known or estimated. 
\item We observe a $\mathbb{R}^d$-valued random variable $X$ with distribution 
\vspace{-8mm}

\[X|K\! =\! k \, \sim P^{X|K}(x|K=k)=\mathcal{N}(x;\mu_k , \Sigma_k)
\]
\end{itemize}
\vspace{-3mm}

For a multivariate Gaussian distribution, we can explicitly write a joint density $P^{X,\tilde{X}|K}$ on $\mathbb{R}^d \times \mathbb{R}^d$ that satisfies the exchangeability property for a given $K$: 
\vspace{-2mm}
\begin{align*}
P^{X,\tilde{X}|K}&(x,\tilde{x}|K=k) =
\\ &
\mathcal{N}\Bigg(\binom{x}{\tilde{x}};\binom{\mu_k}{\mu_k}, \binom{\quad \Sigma_k \qquad \Sigma_k -D_k}{\Sigma_k -D_k \qquad \Sigma_k \quad} \Bigg)
\end{align*}
$D_k$ is a diagonal matrix with the constraint that the joint covariance matrix is positive definite \citep{KN1}. The Gaussian mixture knockoff sampling procedure consists of first sampling the mixture assignment variable from the posterior distribution $P^{K|X}$. The knockoff is then sampled from the conditional distribution of the knockoff given the true variable and the sampled mixture assignment defined as $Q^{\tilde{X}|X,K}(\tilde{x}|x, K=k)$. More explicitly:
\vspace{-6mm}
\begin{align*}
\\&K|X \sim P^{K|X}(k|X) = \frac{\lambda_k \mathcal{N}(X;\mu_k, \Sigma_k)}{\sum_{j=1}^l \lambda_j \mathcal{N}(X;\mu_j, \Sigma_j)}
\\ &\tilde{X} |X,K\! =\! k \, \sim Q^{\tilde{X}|X,K}(\tilde{x}|X, K=k) =  \mathcal{N}\big( \tilde{x} ; \tilde{\mu}_k , \tilde{\Sigma}_k \big)
\\ &\text{where} \;
\begin{cases}
\tilde{\mu}_k = D_k\Sigma_k^{-1} \mu_k + (I_d - D_k\Sigma_k^{-1})X
\\ \tilde{\Sigma}_k = 2D_k - D_k\Sigma_k^{-1}D_k
\end{cases}
\\ &  \text{as $P^{X,\tilde{X}|K}=Q^{\tilde{X}|X,K}P^{X|K}$.}
\end{align*}

\begin{proposition}
\label{proposition:GMM}
The Gaussian mixture knockoff sampling procedure stated above is such that $\tilde{X}$ is a valid knockoff of $X$. 
\end{proposition}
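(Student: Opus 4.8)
The plan is to dispatch the two defining properties of a knockoff sample separately, treating the exchangeability condition as the substantive one. The conditional independence $\tilde{X} \independent Y \mid X$ is immediate: the sampling scheme draws $K$ from the posterior $P^{K\mid X}$ and then $\tilde{X}$ from $Q^{\tilde{X}\mid X, K}$, so $\tilde{X}$ is a randomized function of $X$ alone and never consults $Y$, exactly as noted after the definition of a knockoff sample. All the work therefore lies in establishing exchangeability.

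First I would compute the marginal law of the pair $(X, \tilde{X})$ produced by the procedure by writing out the joint density of the triple $(X, K, \tilde{X})$ and integrating out the latent $K$. Since $X$ is drawn from the mixture and $K$ from its posterior, Bayes' rule gives $P^X(x)\,P^{K\mid X}(k\mid x) = \lambda_k\, P^{X\mid K}(x\mid k)$, and the stated factorization $P^{X,\tilde{X}\mid K} = Q^{\tilde{X}\mid X,K} P^{X\mid K}$ then yields
\[
P^{X,\tilde{X}}(x,\tilde{x}) \;=\; \sum_{k=1}^{l} \lambda_k\, P^{X,\tilde{X}\mid K}(x,\tilde{x}\mid K=k).
\]
Thus the joint is a convex mixture of the component joints $P^{X,\tilde{X}\mid K=k}$.

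Next I would observe that exchangeability is preserved under such mixtures: if every component satisfies $P^{X,\tilde{X}\mid K}([x,\tilde{x}]_{swap(S)}\mid k) = P^{X,\tilde{X}\mid K}([x,\tilde{x}]\mid k)$ for all $S$, then applying the swap to the mixture above and matching terms component by component shows the mixture is swap-invariant as well. The problem therefore reduces to checking exchangeability of a single Gaussian component. Here the mean $(\mu_k,\mu_k)$ is invariant under any pairwise swap because its two blocks coincide, and for the covariance the relevant structure is that the two diagonal blocks both equal $\Sigma_k$, the off-diagonal block $\Sigma_k - D_k$ is symmetric, and---because $D_k$ is diagonal---$\Sigma_k$ and $\Sigma_k - D_k$ agree off their diagonals. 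A direct check that the swap permutation $P_S$ satisfies $P_S\,\Sigma_{\mathrm{joint}}\,P_S^\top = \Sigma_{\mathrm{joint}}$, or an appeal to the Gaussian knockoff construction of \citet{KN1,KN2}, then closes this step.

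The main obstacle, conceptually, is not any single computation but the mixture-decomposition step: one must be confident that \emph{resampling} $K$ from the posterior, rather than conditioning on the latent variable that actually generated $X$, reproduces exactly the generative joint $P^{X,K}$, so that the pair $(X,\tilde{X})$ really is the advertised mixture of exchangeable Gaussians. Once the Bayes identity $P^X P^{K\mid X} = \lambda_k P^{X\mid K}$ is in hand this is routine, but it is the point on which the validity of the whole scheme turns; the per-component Gaussian exchangeability is then standard.
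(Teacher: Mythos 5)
Your proposal is correct and follows essentially the same route as the paper's proof: marginalize the joint over $K$, use the Bayes identity $P^X P^{K\mid X} = \lambda_k P^{X\mid K}$ to recover the generative joint, and recombine with $Q^{\tilde{X}\mid X,K}$ to exhibit $P^{X,\tilde{X}}$ as a convex mixture of the exchangeable component joints $P^{X,\tilde{X}\mid K=k}$. The only differences are that you spell out the per-component Gaussian swap-invariance and the conditional-independence property explicitly, which the paper leaves to a citation and to the remark following the definition of a knockoff sample.
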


The proof is deferred to Appendix~\ref{proof:gaussianmixture}. Once we compute the necessary parameters $\tilde{\mu}_k, \tilde{\Sigma}_k$ of these conditional distributions, we can sample valid knockoffs for a Gaussian mixture model. Whenever we no longer know the ground truth for $P^X$, Gaussian mixtures can approximate arbitrarily well any continuous distribution \citep{gaussianapprox}, so we now have a very flexible model for the feature distribution $P^X$ from which we know how to tractably sample knockoffs. In a real setting, the validity of our knockoffs will be limited by the quality of the approximation of our model for $P^X$ -- the same way in a parametric model of $Y|X$ the validity of the $p$-values depends on the validity of model and distributional assumptions (of the eventual random noise, on the asymptotic regime, etc). Next we show how to efficiently generate valid knockoffs for general Bayesian Networks, which can be used to accurately model $P^X$ in many settings.

\paragraph{Exchangeable Conjugate} Our knockoff sampling procedures will be based on the following idea. Instead of keeping long range dependencies across all features as SCIP does, we want to exploit the structure given by the local conditional distributions of a Bayesian network: sampling knockoffs becomes as hard as doing probabilistic inference on the Bayesian network, and sampling ``local'' knockoffs based on the local structure. 

\begin{definition}
\label{definition:Conjugate}
Let $U \in \mathcal{U}, \; V \in \mathcal{V}$ be arbitrary multi-dimensional random variables. For any given conditional distribution $P^{U|V}(\cdot|v)$, we say that $Q^{\tilde{U}|U,V}(\cdot|u,v)$ is an \emph{exchangeable conjugate conditional} if it is a probability kernel on $(\mathcal{U}\times \mathcal{V}) \times \mathcal{U}$ such that, for any fixed $v$, $\phi_{v}(u,\tilde{u}) := P^{U|V}(u|v)Q^{\tilde{U}|U,V}(\tilde{u}|u,v)$ satisfies the exchangeability property in $(u,\tilde{u})$.
\end{definition}
\vspace{-2mm}
In other words, if $U$ is sampled conditionally on $V$, and then $\tilde{U}$ is sampled from the probability kernel $Q^{\tilde{U}|U,V}$, then $\tilde{U}$ is a knockoff of $U$ (conditionally on $V$). In some cases it is easy to find examples of such a conjugate conditional distribution. If $U$ is univariate, then $Q^{\tilde{U}|U,V}(\cdot|u,v)=P^{U|V}(\cdot|v)$ is a valid conjugate conditional. More generally, this choice is valid if the coordinates of $U$ are independent conditionally on $V$. We can also construct valid conjugates when the distribution of $U$ conditionally on $V$ is Gaussian \citep{KN2} or a Markov chain \citep{KN3}. 

\paragraph{Sampling Knockoffs for Bayesian Networks} We define a procedure for sampling knockoffs when we assume a generative model for $P^X$. $X$ corresponds to the observed variables in a latent variable model which we assume can be represented as a Bayesian Network (BN): Gaussian Mixture Models, Hidden Markov Models (HMM) \citep{HMM}, Latent Dirichlet Allocation (LDA) \citep{LDA}, Naive Bayes \citep{naivebayes} and many other Bayesian models fit this description. Consider a BN defined by the directed acyclic graph (DAG) $(G,E)$ with cardinality $m=|G|$ where the index is a topological ordering. $X_O$ and $X_H$ refer respectively to the observed variables and the hidden ones: we have restated our initial set of features $X$ as $X_O$, and we want to construct knockoffs by leveraging our knowledge about the DAG. The only restriction is that the observed variables must correspond to nodes in the graph without descendents. It is important to notice that here, the variables associated to a node of the DAG \emph{can be multi-dimensional}, which sometimes simplifies the sampling procedure (an example are knockoffs for HMM, detailed in Appendix~\ref{proof:differenceSCIP}).

We assume that we can sample from $P^{H|O}$ (for simplicity we refer to the random variables by their indices), the conditional probability of the hidden variables given the observed ones. Assume also that, for every node $i$, we can compute the local conditional probability $P^{i|MB(i)}$ of $X_i$ given its Markov blanket $X_{MB(i)}$, and that we can compute a knockoff conjugate distribution $Q^{\tilde{i}|i,MB(i)}$ of $P^{i|MB(i)}$ and sample $\tilde{X}_i$ from it, which will be a ``local'' knockoff. We describe the knockoff generating procedure in Algorithm~\ref{algorithm:BNKnockoffAlgo} and further discuss the assumptions on the inputs on Appendix~\ref{proof:differenceSCIP}.

\begin{algorithm}
    \SetKwInOut{Input}{Input}
    \SetKwInOut{Output}{Output}
    \Input{Initial sample $X_O$, conditional distribution $P^{H|O}$, conjugate conditionals $Q^{\tilde{i}|i,MB(i)}$ for every node $i$.}
    \Output{Knockoff sample $\tilde{X}_O$}
    Sample $X_H \sim P^{H|O}(\cdot | X_O)$\;
   \For{$i = 1$ \KwTo $m$ }{
    Sample $\tilde{X}_{i} \sim Q^{\tilde{i}|i,MB(i)}(\cdot | \tilde{X}_{\{1:i-1\}\cap MB(i)},X_{\{i+1:m\}\cap MB(i)},X_i)$
    }
    \Return $\tilde{X}_O = (\tilde{X}_i)_{i\in O}$
    \caption{Knockoff Sampling Procedure in BN}\label{algorithm:BNKnockoffAlgo}
\end{algorithm}

\begin{theorem}
\label{theorem:BNKnockoffThm}
The distribution of the concatenated random variables $(X_O,\tilde{X}_O)$ satisfies the exchangeability property. That is, $\tilde{X}_O$ is a valid knockoff of $X_O$.
\end{theorem}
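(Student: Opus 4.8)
The plan is to prove the slightly reorganized statement that the law of $(X_{1:m},\tilde X_{1:m})$, \emph{after the hidden pairs are integrated out}, is invariant under swapping $X_j\leftrightarrow\tilde X_j$ for every admissible $S$. Since exchangeability is required only for subsets $S$ of the observed features, which live inside the observed (sink) nodes $O$, and the swap maps $\{\sigma_j\}_{j\in O}$ are commuting involutions generating an elementary abelian $2$-group, it is enough to verify invariance under a single swap; for the induction it will be more convenient to maintain invariance under arbitrary subsets of the already-processed observed pairs. I would first record the two structural facts I lean on. Writing the originals through the Bayesian-network factorization and appending the sampled knockoffs gives the explicit joint density
\[
p(x_{1:m},\tilde x_{1:m})=\prod_{k=1}^m P^{k|\mathrm{pa}(k)}(x_k\mid x_{\mathrm{pa}(k)})\ \prod_{i=1}^m Q^{\tilde i|i,MB(i)}\!\big(\tilde x_i\mid \tilde x_{\{1:i-1\}\cap MB(i)},x_{\{i+1:m\}\cap MB(i)},x_i\big).
\]
Definition~\ref{definition:Conjugate} is exactly a local exchangeability (detailed-balance-type) identity: for every node $i$ and every fixed blanket value $v$, $P^{i|MB(i)}(x_i\mid v)\,Q^{\tilde i|i,MB(i)}(\tilde x_i\mid v,x_i)$ is exchangeable in $(x_i,\tilde x_i)$, so $Q^{\tilde i|i,MB(i)}(\cdot\mid v,\cdot)$ is reversible with respect to $P^{i|MB(i)}(\cdot\mid v)$. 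Because node variables may be multi-dimensional, this is the \emph{full} within-node exchangeability, which already disposes of swaps acting on individual coordinates inside a node.

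The core of the argument is an elimination/induction along the topological order that integrates out the hidden coordinates and their knockoffs one node at a time while maintaining exchangeability of the accumulated observed pairs. I would process nodes from the last index downward. When the current node is observed it is a sink, so $MB(i)=\mathrm{pa}(i)\subseteq H$ and $\tilde X_i\sim Q^{\tilde i|i,MB(i)}(\cdot\mid\tilde x_{\mathrm{pa}(i)},x_i)$; I keep the pair $(X_i,\tilde X_i)$ and use the detailed-balance identity at node $i$ to exchange $x_i\leftrightarrow\tilde x_i$ inside the conditional. When the current node is hidden I integrate out $(X_i,\tilde X_i)$. The inductive invariant is that, after eliminating the processed suffix, the remaining density over the un-eliminated originals together with the retained observed pairs is invariant under swapping any subset of those retained pairs; the detailed-balance identity supplies the base exchange at each node, and the fact that $\tilde X_i$ feeds only into the conjugates of strictly later blanket neighbours keeps the elimination order consistent with the topological order.

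The hard part is the conditioning mismatch that makes this more than a line of algebra. In the joint density $x_k$ is conditioned through $P^{k|\mathrm{pa}(k)}$ on the \emph{original} parents, whereas $Q^{\tilde i|i,MB(i)}$ is conditioned on the \emph{knockoffs} of earlier blanket members and the \emph{originals} of later ones; hence the symmetric local factors $P^{i|MB(i)}\cdot Q^{\tilde i|i,MB(i)}$ are not literally present in the product, and a single observed pair $(X_j,\tilde X_j)$ appears not only in its own factor but also inside the conjugate kernels of its hidden parents through the ``later-original'' slot. A direct pointwise computation shows the \emph{full} joint is in general \emph{not} single-swap invariant, so the marginalization over $(X_H,\tilde X_H)$ is essential: the content of the theorem is precisely that integrating out the hidden pairs symmetrizes these cross terms. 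I expect the main obstacle to be showing that this marginalization commutes with the swap — that eliminating a hidden node converts its neighbours' asymmetric original-parent conditioning into the symmetric form demanded by the next detailed-balance step — and I would discharge it by combining the reversibility of $Q^{\tilde i|i,MB(i)}$ at the eliminated node with the network identity $p(x_i\mid x_{-i})=P^{i|MB(i)}(x_i\mid x_{MB(i)})$, so that each integration reproduces the invariant on the smaller graph.
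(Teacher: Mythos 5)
Your overall strategy is the right one and is essentially the paper's: write the joint density as the Bayesian-network factorization times the product of conjugate kernels, marginalize out the hidden pairs, and apply the exchangeable-conjugate (``detailed balance'') identity node by node along the topological order. You have also correctly isolated the crux --- the mismatch between $P^{k|\mathrm{pa}(k)}(x_k\mid x_{\mathrm{pa}(k)})$, which conditions on \emph{original} parents, and $Q^{\tilde k|k,MB(k)}$, which conditions on \emph{knockoffs} of earlier blanket members and \emph{originals} of later ones. But you do not resolve that crux: you explicitly defer it (``I expect the main obstacle to be \dots I would discharge it by \dots''), and the direction you choose for the induction makes it unresolvable as stated. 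Processing nodes from the last index downward fails at the very first step: for the last node $m$, the kernel is $Q^{\tilde m|m,MB(m)}(\tilde x_m\mid \tilde x_{MB(m)},x_m)$, conditioned on the knockoff blanket, while the only conditional of $X_m$ you can extract from the joint at that point is $P^{m|MB(m)}(x_m\mid x_{MB(m)})$, conditioned on the original blanket; Definition~\ref{definition:Conjugate} requires the \emph{same} blanket value $v$ in both factors, so the exchange $x_m\leftrightarrow\tilde x_m$ is not licensed. Moreover, going backward, the variable being eliminated appears in many factors at once (every kernel $Q^{\tilde i|i,MB(i)}$ with $i<m$ and $m\in MB(i)$ sees $X_m$ through its later-originals slot), so the elimination does not telescope.

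The paper's proof goes \emph{forward}, $k=1,\dots,m$, and maintains as the inductive invariant the partially swapped marginal $P(\tilde X_{1:k-1},X_{k:m})$ multiplied by the already-converted kernels for observed $i\le k-1$ and the untouched kernels for $i\ge k$. With this invariant, $X_k$ appears in exactly two factors --- that marginal and $Q^{\tilde k|k,MB(k)}$ --- and the conditional read off from the marginal is $P^{k|MB(k)}(X_k\mid \tilde X_{\{1:k-1\}\cap MB(k)},X_{\{k+1:m\}\cap MB(k)})$, whose blanket value coincides exactly with the conditioning of the conjugate kernel; the exchange identity then applies verbatim, and if $k\in H$ the swapped variable $X_k$ sits only inside a normalized density and integrates to $1$. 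This forward bookkeeping is precisely the mechanism that ``converts the neighbours' asymmetric original-parent conditioning into the symmetric form,'' and it is the step your proposal is missing: as written, your argument is a correct diagnosis of the difficulty rather than a proof. The fix is to reverse your elimination order and carry the partially swapped marginal as the invariant.
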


The proof is in Appendix~\ref{proof:DAGknockoff}.
This sampling method is not equivalent to the Sequential Conditional Independent Pairs (SCIP) introduced in \citep{KN2}. Consider the simple case with one observed multi-dimensional variable $X$ and one hidden variable $H$. SCIP would sample each coordinate $\tilde{X}_{i}$ of $\tilde{X}$ sequentially by computing (for each knockoff sample) conditional distributions of the joint probability distribution of $(X,\tilde{X}_{j\leq i})$, without leveraging our knowledge of the generative process. That is not computationally feasible in an arbitrary generative model. Another difference with respect to SCIP is that our sampling procedure is not creating a knockoff from the full set of hidden and observed variables: not all ``local'' knockoffs are retained in the final output. That is, $(\tilde{X}_O,\tilde{X}_H)$ is not a valid knockoff for $(X_O,X_H)$. We refer to Appendix~\ref{proof:differenceSCIP} for more details on these differences between methods.

The HMM knockoff sampling procedure \citep{KN3} and the Gaussian mixture sampling above are special cases of Algorithm~\ref{algorithm:BNKnockoffAlgo} (detailed in Appendix~\ref{proof:differenceSCIP}). Naive Bayes would have a straightforward knockoff sampling scheme (as the observed nodes are independent conditionally on the hidden node): given the observed nodes, generate a sample for the hidden node (based on the posterior distribution), then sample independent knockoffs based on the sampled hidden node. We provide in Appendix~\ref{proof:differenceSCIP} an additional example of a DAG knockoff generating procedure based on the LDA model.

\section{POWER ANALYSIS: EVALUATING  IMPORTANCE SCORE METHODS}

\begin{figure}[ht]
\centering
\includegraphics[width=\linewidth]{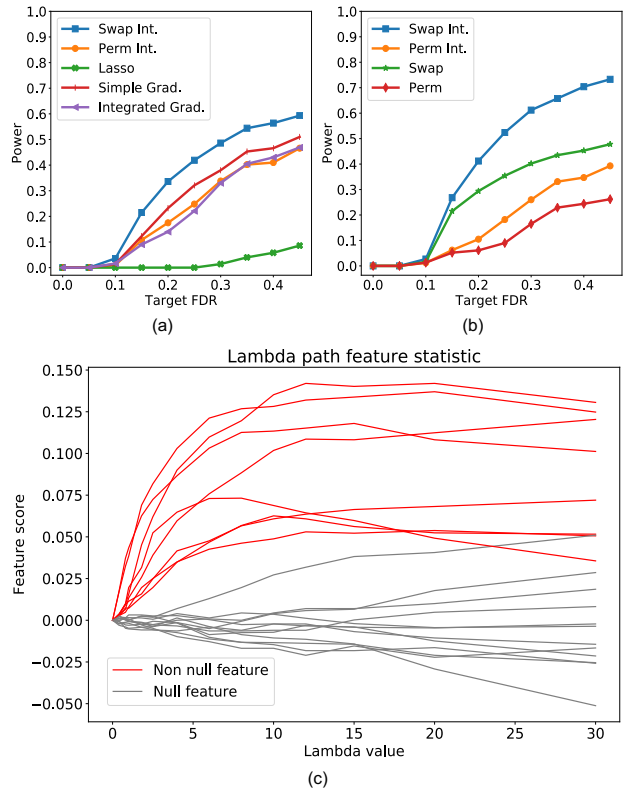}
\vspace{-5mm}
\caption{\textbf{Comparison Between Importance Score Methods} (a) Synthetic data where $Y|X$ is a polynomial. Using a linear LASSO regression model to generate importance scores results in near-zero power; our proposed Swap Integral statistic achieves higher power across all the target FDRs. (b) Swap Integral and Permutation Integral achieve higher power compared respectively to basic Swap and Permutation. (c) Example of lambda path for Swap method.}
\label{fig:lambda}
\end{figure}

The previous section shows how to generate valid knockoffs $\tilde{X}$ for complex latent variable distributions. The next challenge is to compute importance scores for each original and knockoff feature; this was denoted as $Z_j$ and $\tilde{Z}_j$ in Section. 2 (recall that the feature statistics is $W_j = Z_j - \tilde{Z}_j$). While the knockoff procedure guarantees that the FDR is controlled for any importance score, the power of the procedure---i.e. how many true non-null features are discovered---is highly dependent on the method for computing the  importance score. In this section we discuss several importance score methods applicable to a large family of predictive models and also define new ones.

\paragraph{LASSO Coefficient-Difference (LCD)} were used as feature statistics by \citet{KN2}. However, for non-linear $Y|X$, using importance scores based on a linear model assumption will result in low power as LASSO is not able to fit the real input-output relationship (Figure~\ref{fig:lambda}(a)). Complex unknown non-linear input-output relationship is an important restriction if we want to use a parametric model to obtain importance scores through interpretable parameters. 

\paragraph{Saliency Methods} If $Y|X$ is fitted using a neural network, then one could compute saliency scores, a popular metric for importance scores~\citep{lipton2016mythos,ghorbani2017interpretation}. Saliency is computed for each feature of each example, and averaged across all the examples to derive a global importance score. In our experiments, we use Gradient~\citep{baehrens2010explain, simonyan2013deep} and Integrated Gradients~\citep{integrated_gradients} saliency methods.

\paragraph{Importance Scores Based on Accuracy Drops} After a model is trained on the dataset $([X,\tilde{X}],Y)$, for each column of $[X,\tilde{X}]\in \mathbb{R}^{N\times 2d}$, we create a new ``fake'' feature column resulting in the ``fake'' set of features $[X^f,\tilde{X}^f]$, for example by permuting across samples column-wise. Replacing each feature with its fake version one at a time, the accuracy drop can be used as the importance score $Z_j$ for that feature. 
 
Note that achieving a high performance of the classifier on the initial dataset is not necessary to generate these importance scores, as our analysis is based on the accuracy drops. Classifiers with low predictive power can still be useful at determining non-null features, and although they may not make many discoveries, FDR is still controlled.

\paragraph{Permutation and Swapping} Originally introduced for Random Forests \citep{breiman2001random}, the \textbf{Permutation} method creates each column of $[X^f,\tilde{X}^f]$ by simply shuffling the entries of the corresponding column in $[X,\tilde{X}]$.
Although the permutation  method may be effective at creating a new ``fake'' feature independently from $Y$, it comes with an important issue: when evaluating the classifier on the dataset where just one of the features has been shuffled, for general distributions of the features, we may end up with fake data points that are completely off the original data distribution and as a result outside of the learned manifold of the predictive model. The classifier's predictions on these \emph{off-distribution} regions of the input space where it hasn't been trained may be arbitrary. As a result, the performance can potentially drop across all features and their knockoffs, regardless of whether they actually had an impact on $Y$. A simple illustration of this phenomenon has been provided in Appendix~\ref{appendix:schematic}.
 
The main problem occurs with the non-null features, as an unexpectedly large drop in accuracy for the control knockoff feature would mask the non-null importance score, hence limiting the possibility that it is selected. When the original distribution $P^X$ of the features is a mixture of Gaussians, permutation across any of the features will create data points that do not lie on any of the mixture components of $P^X$, and as the number of mixtures increases, the chance for a fake data point to be off-distribution increases. We confirm this phenomenon with synthetic data experiments Figure~\ref{fig:methods}(a-c). Power for the Permutation method strictly decreases as the number of mixture components grows.

To tackle this problem, we introduce the \textbf{Swap} method for importance scores. Instead of shuffling, for each original feature column, we use its knockoff corresponding column as its fake replacement and the other way around: $X^f = \tilde{X}$ and $\tilde{X}^f = X$. Keeping in mind the exchangeability condition when generating knockoffs, this method guarantees that replacing each feature with its fake will result in data points that still belong to the original data distribution, and therefore the problem with fake data points moving to regions of the space unknown to the predictive model is mitigated. 

\paragraph{Path Integration Generates Importance Scores with Better Power} For a given parameter $\lambda \! \geq \! 0$, we compute for each original feature the fake replacement $X^f_j(\lambda) = X_j + \lambda(\tilde{X}_j - X_j)$ (and reciprocally for each knockoff feature). The basic Swap (or Permutation) method corresponds to $\lambda = 1$, and we plot the feature statistics $W$ as a function of the parameter $\lambda$, which we call \emph{lambda path}. 
We give an example of one such path in Figure~\ref{fig:lambda}(c) for a simulation where the classifier is a neural network with 3 fully connected layers, and $Y$ is categorical with 10 classes such that $Y|X$ corresponds to the level sets of a random polynomial of degree 3 of the non-null features, and $X$ comes from a mixture of 10 Gaussians.  Similarly, we can also take lambda path for the  Permutation method, where we compute $X^f_j(\lambda)$ by taking  $\tilde{X}_j$ as the shuffled feature $X_j$. However, our lambda path quickly becomes unstable as for $\lambda=1$ we already get out of the data manifold.

Basic Swap and Permutation (which correspond to $\lambda = 1$) do not always give the best separation between the null features (grey) and the alternate features (red). This motivates taking the integral of the lambda path and use the area under the curve as the feature statistic $W$---we call this \textbf{Swap Integral} and, similarly, \textbf{Permutation Integral}. For all of our experiments, we integrate from $\lambda = 0$ to $\lambda = 10$. Figure~\ref{fig:lambda}(b) shows that, for the same data as used in Figure~\ref{fig:lambda}(c), Swap Integral and Permutation Integral methods are respectively much more powerful than the basic Swap and Permutation. 

Our goal is to identify the classifier's decision boundaries within the data manifold (thus likely to identify relevant features). When considering a lambda path, all comes down to choosing an appropriate direction when translating each feature, and keeping track of the decision boundaries crossed. We may temporarily step out of the data manifold (indeed there is no guarantee that the manifold is convex, our Gaussian mixture model is especially well-suited for this) --and eventually will be out of it. The integral (up to $\lambda = 10$) assigns a higher importance to boundaries crossed early on. As the Swap Integral method ``stays longer'' in the data manifold early on ($\lambda = 1$ is still in it), it performs better than when choosing an arbitrary direction (such as one given by a shuffled vector, i.e. the Permutation Integral). Knockoffs not only are useful for FDR control, they also contain information about the data manifold. The outcome of the following experiments whenever we use Swap Integral does not substantially change whenever $\lambda$ stays in the range 5 to 15.

\section{EXPERIMENTS}

We start by precising that we will not be able to compare our DAG knockoff procedure (for the Gaussian Mixture Model) with the more general SCIP procedure, because the latter has no tractable formulation even for very simple models like a simple multivariate Gaussian.

\paragraph{The Need for Mixture Models} \label{subsection:needmixture} We give a representative example of a simple setting where simply using the empirical covariance matrix to generate the knockoff---i.e. assuming a single Gaussian model---yields invalid knockoffs for which FDR is not controlled. 

We generate data $X$ from a mixture of three correlated Gaussian random vectors in dimension 30. Approximating $P^X$ by a single Gaussian vector yields an empirical covariance matrix which is dominated by the diagonal (and therefore considers $P^X$ as having independent coordinates). We generate a binary outcome $Y$ in a logistic regression setting (such that the log odds of $Y=1$ is a linear combination of the features of X). The linear combination is such that just the first 10 features of $X$ are non-null (i.e. have non zero coefficient, red), the next 10 (null, black) features are correlated with the non-nulls, and the last 10 are independent nulls. During the knockoff procedure, the importance scores $Z$ are the absolute value of the logistic regression coefficients of each feature and the feature statistics $W$ are the difference of importance scores. 

\begin{figure}[ht]
\centering
\includegraphics[width=0.9\linewidth]{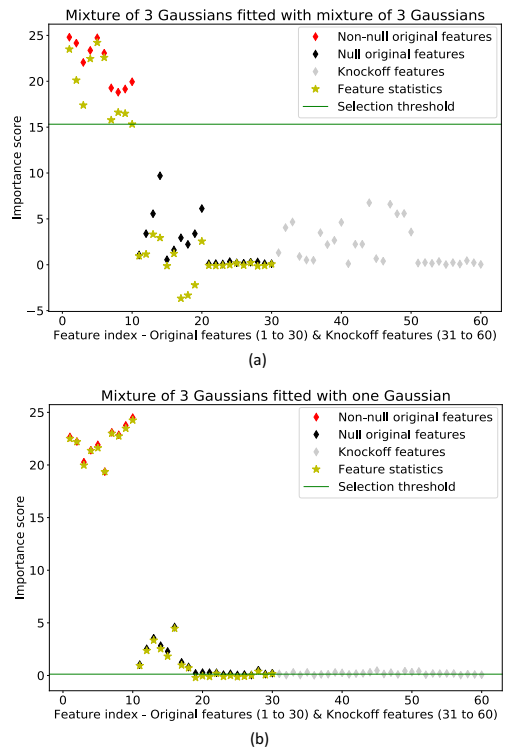}
\vspace{-5mm}
\caption{\textbf{FDR Control Fails for Non Valid Knockoffs:} We plot side by side the importance scores for each feature (original and knockoff, 60 in total) for one run of the simulation, and the feature statistics computed as the difference between importance scores of original vs. knockoff features (plotted in the first 30 indices). Selected features are those whose feature statistic $W$ is above the threshold.}
\label{fig:break}
\end{figure}

\begin{figure}[ht]
\centering
\includegraphics[width=0.85\linewidth]{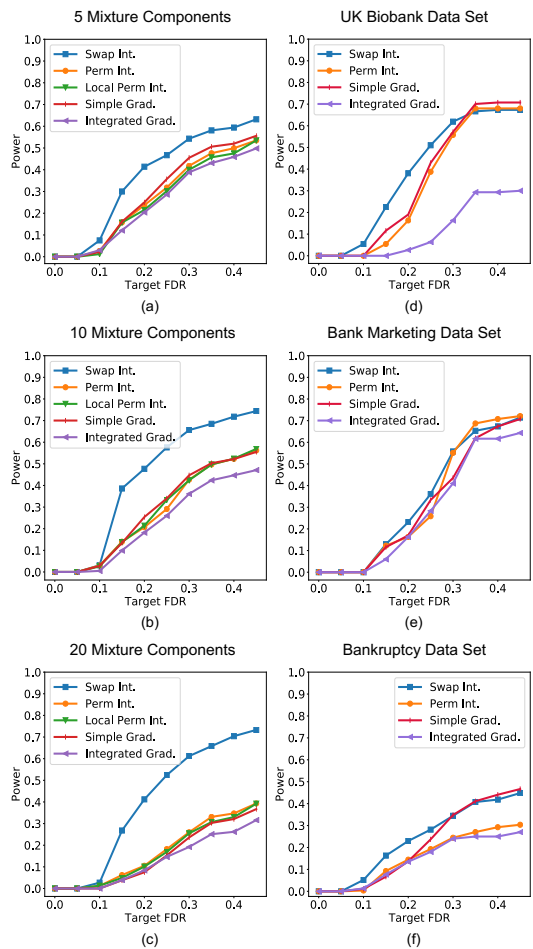}
\vspace{-0.2cm}
\caption{\textbf{Power Comparison Between Importance Score Methods} (a-c) As the number of mixture components increases, the power of Swap Integral method remains high while that of other methods decreases. (d-f)  Power for Swap Integral method is as good as and sometimes better than other methods on real world data with synthetic labels.}
\label{fig:methods}
\end{figure}

We run our knockoff procedure, modeling $P^X$ as a mixture of three multivariate Gaussian variables fitted via EM (Figure~\ref{fig:break}(a)), and as a single Gaussian (Figure~\ref{fig:break}(b)). Importance scores $Z_j$ of the non-null original features are high in both cases, and those of the correlated null original features are also quite high. Importance scores of valid knockoffs cancel out the correlated null original features' importance scores, and therefore the adaptively chosen threshold avoids selecting nulls (Figure~\ref{fig:break}(a)). However, when modeling by one mixture, the knockoffs corresponding to the null features (10-20) have importance scores close to 0 (Figure~\ref{fig:break}(b)). As a result these correlated null features are selected by the procedure and the FDR control is broken. After 40 repetitions of the procedure (randomly generating each run the parameters of $P^X$ and sampling $X$), for a target FDR of $0.2$, we get FDR control when we use our mixture model knockoff (empirical FDR: $0.1823$) and fail to control FDR whenever we fit the data with one single multivariate Gaussian (empirical FDR: $0.5565$).

\paragraph{Feature Selection in Synthetic Data} We generate $P^X$ as mixtures of 5, 10 and 20 multivariate Gaussians, and $Y$ as polynomial functions of a subset of the dimensions of $X$---i.e. these are the non-null features. From each  $P^X$, we draw samples of $X$ and $Y$, fit a mixture of Gaussians and run our knockoff procedure. Figure~\ref{fig:methods} (a-c) shows the power of five importance score methods, and Swap Integral consistently achieves the best power. In all of our experiments, the empirical FDR is controlled at the target level (Appendix Figure~\ref{fig:empiricalFDR}).

\paragraph{Feature Selection in Real Data}

Our real-world simulations are based on three datasets from the UK biobank data and the Bank Marketing and Polish Bankruptcy from the UCI repository \citep{UCI}. We provide more information in Appendix~\ref{appendix:discovery}. As our first experiment, in order to have control over the non-null features, we created synthetic labels for each dataset by randomly selecting a subset of the features to be alternatives (the rest are nulls) and use noisy polynomials on these features to generate labels. Details of the data generation are described in Appendix~\ref{appendix:synthetic}. We fit each dataset with a Gaussian mixture model using the AIC method \citep{aic}. AIC automatically selected 9, 15, and 25 number of clusters for the UK Biobank, Bank Marketing and Bankruptcy datasets, respectively. Then we apply our Gaussian Mixture knockoff sampler to generate knockoffs for each dataset. Average results for repeating this procedure 100 times on each data set are reported in Figure~\ref{fig:methods}. Across the range of target FDRs, the Swap Integral has at least as much power (and sometimes much more) than the other feature statistic methods. It is important to notice that, as we work with a real dataset $X$ that we cannot resample, our experiments cannot provide us with empirical FDR values in order to confirm FDR control. We expect that our model for $P^X$ is accurate enough so that the robustness of the procedure accounts for model inaccuracies.

As a discovery experiment, we run the knockoff procedure for feature selection on each of the three data sets with the real labels --cancer in UK Biobank, telemarketing success in Marketing, and bankruptcy in Bankruptcy. In all cases we train a neural network binary classifier with 4 hidden layers of size 100. All selected features are described in Appendix~\ref{appendix:discovery}. With the real data, we do not have ground truth on which are the true alternate features. However, knockoff selected features make sense and some of the features agree with previously reported literature. With target FDR of 0.3 (it can be shown that for small target FDR values the knockoff procedure requires a minimum number of rejections, which is why we selected a fairly large target FDR), in UK Biobank cancer prediction, the knockoff procedure selected 14 out of the total 284 features. These 14 include \emph{Number of smoked cigarettes daily} and \emph{Wine intake}. Eight out of ten features in Bank marketing data set were selected. For Bankruptcy prediction, the knockoff procedure selected features including \emph{gross profit} and \emph{short-term liabilities}.

\vspace{-0.1cm}
\paragraph{Discussion} The knockoff procedure is a powerful framework for selecting important features while statistically controlling false discoveries. The two main challenges of applying knockoffs in practice are the difficulty in generating valid knockoffs and the choice of importance scores. This paper takes a step toward addressing both challenges. We develop a framework to efficiently sample valid knockoffs from Bayesian Networks. We show that even a simple application of this framework to generate knockoffs from Gaussian mixture models already enables us to apply our procedure with several real datasets, while being optimistic that FDR is controlled. We also systematically evaluate and compare the statistical power of several importance scores. We propose a new score, Swap Integral, which is stable and substantially more powerful. Swap Integral can be applied on top of any classification model, including neural networks and random forests. Note that Swap Integral is a post-processing on top of a single trained classifier and hence is computationally efficient. Our results enable knockoffs to be more practically useful and open many doors for future investigation.

\subsubsection*{Acknowledgments}

J.R.G. and A.G. were supported by a Stanford Graduate Fellowship. J.Z. is supported by a Chan–Zuckerberg Biohub Investigator grant and National Science Foundation (NSF) Grant CRII 1657155.

\bibliographystyle{unsrtnat}

\bibliography{main}

\newpage

\appendix

\section*{APPENDIX}

\section{Proofs}
\label{appendix:proofs}

\subsection{Validity of Importance Scores with Random Component}
\label{appendix:randomfeatures}

Following the notation by \citet{KN2} for Lemma 3.3, denoting $\Wb_{swap(S)}=\Wb([X,\tilde{X}]_{swap(S)},Y)$ the full vector of feature statistics when swapping features in $S$, the \emph{flip-sign property} can be summarized as: $W_{swap(S)} = \epsilon_S \odot W$ where $\odot$ is the element-wise vector multiplication and $\epsilon_S = \mathbb{I}_{j\notin S} - \mathbb{I}_{j \in S}$. As discussed by \citet{KN2}, it should be highlighted that the final selection procedure controls FDR just because of this property. Now, by directly referring to the proof of Lemma 3.3 by \citet{KN2}, we observe that it relies on the flip-sign property just as an equality in distribution. Therefore, with this exact same proof, we get that the result still holds when feature statistics $W$ satisfy the previous equality only in distribution. This allows us to construct valid feature statistics $W$ based on random components that are not limited to the randomness in the data itself. We can therefore also construct randomized $Z$ statistics, and we prove that the constraint mentioned earlier only needs to hold in distribution to end up with $W$ satisfying the flip-sign condition in distribution.

\begin{proposition}
Assume that the following equality holds in distribution for any subset ${S \subset \{1,\dots,d\}}$:
\[
Z([X,\tilde{X}]_{swap(S)},Y) \; \stackrel{d}{=} \;Z([X,\tilde{X}],Y)_{swap(S)}
\]
Then we have the following equality in distribution:
\begin{equation}
W_{swap(S)} \; \stackrel{d}{=} \; \epsilon_S \odot W
\end{equation}
\end{proposition}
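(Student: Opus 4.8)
The plan is to isolate a single fixed deterministic map that turns the importance-score vector into the feature statistics, verify that this map intertwines the column-swap operation with the coordinate-wise sign flip $\epsilon_S\odot(\cdot)$, and then push the hypothesised equality in distribution through that map. Because equality in distribution is preserved by any fixed measurable function, no probabilistic work beyond this single observation is required; all the content lies in an elementary algebraic identity.

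First I would define the difference map $g:\mathbb{R}^{2d}\to\mathbb{R}^d$ by $g(z)_j = z_j - z_{d+j}$, where $z=(z_1,\dots,z_d,z_{d+1},\dots,z_{2d})$. With this notation $W = g\big(Z([X,\tilde{X}],Y)\big)$, and since the appendix sets $\Wb_{swap(S)} = \Wb([X,\tilde{X}]_{swap(S)},Y)$, we also have $W_{swap(S)} = g\big(Z([X,\tilde{X}]_{swap(S)},Y)\big)$. Let $\sigma_S$ denote the swap operator on $\mathbb{R}^{2d}$ that exchanges coordinates $j$ and $d+j$ for every $j\in S$, so that the right-hand side of the hypothesis is $Z([X,\tilde{X}],Y)_{swap(S)} = \sigma_S\big(Z([X,\tilde{X}],Y)\big)$.

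The key step is the intertwining identity $g(\sigma_S z) = \epsilon_S \odot g(z)$ for every $z\in\mathbb{R}^{2d}$, which I would verify by a one-line case split. For $j\in S$ the swap sends $(z_j,z_{d+j})$ to $(z_{d+j},z_j)$, so $g(\sigma_S z)_j = z_{d+j}-z_j = -g(z)_j$; for $j\notin S$ the two relevant coordinates are untouched and $g(\sigma_S z)_j = z_j - z_{d+j} = g(z)_j$. Together these read exactly as $\epsilon_S\odot g(z)$ with $\epsilon_S = \mathbb{I}_{j\notin S}-\mathbb{I}_{j\in S}$. Then I would combine the ingredients: applying the fixed measurable map $g$ to both sides of the hypothesis $Z([X,\tilde{X}]_{swap(S)},Y) \stackrel{d}{=} \sigma_S\big(Z([X,\tilde{X}],Y)\big)$ preserves equality in distribution, so $g\big(Z([X,\tilde{X}]_{swap(S)},Y)\big) \stackrel{d}{=} g\big(\sigma_S(Z([X,\tilde{X}],Y))\big)$. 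The left-hand side is $W_{swap(S)}$ and the right-hand side is $\epsilon_S\odot g\big(Z([X,\tilde{X}],Y)\big) = \epsilon_S\odot W$ by the intertwining identity, giving the claim.

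I expect the only point requiring care to be conceptual rather than technical: $g$ must be a single deterministic function that introduces no randomness of its own (it does not, being the fixed linear difference map), and the swap appearing on the right-hand side of the hypothesis acts on the \emph{output} coordinates of $Z$, not on the data columns. Once these are pinned down the argument is immediate, and in particular it never uses the pointwise flip-sign identity $Z([X,\tilde{X}]_{swap(S)},Y) = \sigma_S\big(Z([X,\tilde{X}],Y)\big)$, only its distributional version, which is precisely the relaxation the proposition is designed to exploit.
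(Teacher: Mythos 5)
Your proof is correct and follows essentially the same route as the paper's: both arguments push the assumed distributional identity for $Z$ through a fixed antisymmetric map (your difference map $g$, the paper's componentwise $f_j$ with $f_j(a,b)=-f_j(b,a)$), using only that equality in distribution is preserved under deterministic measurable functions. The one cosmetic difference is that you verify the intertwining identity $g(\sigma_S z)=\epsilon_S\odot g(z)$ for arbitrary $S$ directly, whereas the paper reduces to singleton swaps and composes; your version avoids that reduction but the underlying idea is identical.
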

\begin{proof}
It suffices to show the result for $S=\{1\}$, as the general case can be decomposed as the concatenation of swaps of just one coordinate.
\begin{align*}
W(&[X,\tilde{X}]_{swap(S)},Y)=
\\ 
&  \begin{bmatrix}
   f_1(Z_1([X,\tilde{X}]_{swap(S)},Y),\tilde{Z}_1([X,\tilde{X}]_{swap(S)},Y)) \\
   f_2(Z_2([X,\tilde{X}]_{swap(S)},Y),\tilde{Z}_2([X,\tilde{X}]_{swap(S)},Y)) \\
 \vdots \\
    f_d(Z_d([X,\tilde{X}]_{swap(S)},Y),\tilde{Z}_d([X,\tilde{X}]_{swap(S)},Y))
\end{bmatrix}
\\ \stackrel{d}{=} & 
\begin{bmatrix}
   f_1(\tilde{Z}_1([X,\tilde{X}],Y),Z_1([X,\tilde{X}],Y)) \\
   f_2(Z_2([X,\tilde{X}],Y),\tilde{Z}_2([X,\tilde{X}],Y)) \\
 \vdots \\
    f_d(Z_d([X,\tilde{X}],Y),\tilde{Z}_d([X,\tilde{X}],Y))
\end{bmatrix}
\end{align*}
\begin{align*}
\\ \stackrel{d}{=} & 
\begin{bmatrix}
   -f_1(Z_1([X,\tilde{X}],Y),\tilde{Z}_1([X,\tilde{X}],Y)) \\
   f_2(Z_2([X,\tilde{X}],Y),\tilde{Z}_2([X,\tilde{X}],Y)) \\
 \vdots \\
    f_d(Z_d([X,\tilde{X}],Y),\tilde{Z}_d([X,\tilde{X}],Y))
\end{bmatrix}
\\ \stackrel{d}{=} &\;  \epsilon_S \odot W([X,\tilde{X}],Y)
\end{align*}

\end{proof}

\subsection{Proof of Proposition~\ref{proposition:GMM}: GMM Knockoff Sampling Procedure}
\label{proof:gaussianmixture}

We prove Proposition~\ref{proposition:GMM}, although this exact same proof applies in the more general setting of the Algorithm~\ref{algorithm:EasyKnockoffAlgo} in next section.

\begin{proof}
We consider the marginal distribution over $(X,\tilde{X})$ by summing the full joint distribution over all possible values of $K$. We then decompose the joint distribution along the sampling steps.
\begin{align*}
& P(X ,\tilde{X}) =  \sum_{k=1}^l P(X,\tilde{X},K=k) 
\\ =&  \sum_{k=1}^l Q^{\tilde{X}|X,K}\!(\tilde{X}|X,K=k)P(X,K=k) 
\\=& \sum_{k=1}^l Q^{\tilde{X}|X,K}\!(\tilde{X}|X,K=k) P^{X|K}\!(X|K=k)P(K=k)
\\ = & \sum_{k=1}^l Q^{X,\tilde{X}|K}(X,\tilde{X}|K=k) P(K=k) 
\end{align*}

This proves exchangeability as the last line satisfies exchangeability in $(X,\tilde{X})$.
\end{proof}

\subsection{Comparison of Algorithm~\ref{algorithm:BNKnockoffAlgo} and SCIP}
\label{proof:differenceSCIP}

The main contribution of our Bayesian network knockoff sampling method is due to the intractability of SCIP for a general feature distribution $P^X$ as mentioned in \ref{section:GeneratingKnockoffs}.

Indeed, SCIP sequentially samples for $1\! \leq \! i \! \leq \! d$ the knockoff $\tilde{X}_i$ of the $i$th feature from the conditional distribution of $X_i$ given $X_{-i},(\tilde{X}_j)_{j < i}$. That means that, at each step of the sampling process, for each sample, one needs to compute the joint distribution of $X,(\tilde{X}_j)_{j<i}$ and the conditional distribution of $X_i$, which is not computationally feasible if we assume a complex model for $P^X$.

Another difference is shown by the following: suppose that we observe variable $X$ for which we want to sample a knockoff, and that its distribution is conditioned on a latent variable $H$. If we assume that we can construct easily the conjugates $Q^{\tilde{X}|X,H}(\tilde{x}|x,h)$ and $Q^{\tilde{H}|H,X}(\tilde{h}|h,x)$, then Algorithm~\ref{algorithm:BNKnockoffAlgo} simplifies into Algorithm~\ref{algorithm:EasyKnockoffAlgo}.

\begin{algorithm}
Sample $H \sim P^{H|X}(\cdot|X)$\;
Sample $\tilde{H} \sim Q^{\tilde{H}|H,X}(\cdot|H,X)$\;
Sample $\tilde{X} \sim Q^{\tilde{X}|X,H}(\cdot|X,\tilde{H})$\;
\caption{Knockoff Sampling Procedure for a simple Latent Variable Model}\label{algorithm:EasyKnockoffAlgo}
\end{algorithm}

We show in this simple setting that $(\tilde{H}, \tilde{X})$ is not a knockoff of $(H,X)$. We write the joint distribution and decompose it along the sampling steps.

\begin{align*}
P(H,&X,\tilde{H},\tilde{X}) 
\\ =& \: Q^{\tilde{X}|X,H}(\tilde{X}|X,\tilde{H})Q^{\tilde{H}|H,X}(\tilde{H}|H,X)
\\ & \hspace{40mm} P^{H|X}(H|X)P^{X}(X)
\\ = & \: Q^{\tilde{X}|X,H}(\tilde{X}|X,\tilde{H})Q^{\tilde{H}|H,X}(H|\tilde{H},X)
\\ & \hspace{40mm} P^{H|X}(\tilde{H}|X)P^{X}(X)
\end{align*}
To prove exchangeability of $(X,\tilde{X})$, we have to marginalize over the hidden states.

\begin{align*}
\\& P(X,\tilde{X})=  \sum_{H,\tilde{H}}P(X,\tilde{X},H,\tilde{H})  
\\ =&  \sum_{H,\tilde{H}}Q^{\tilde{X}|X,H}(\tilde{X}|X,\tilde{H})Q^{\tilde{H}|H,X}(H|\tilde{H},X)
\\ & \hspace{40mm} P^{H|X}(\tilde{H}|X)P^{X}(X)
\\ = & \sum_{\tilde{H}}Q^{\tilde{X}|X,H}(\tilde{X}|X,\tilde{H})P^{H|X}(\tilde{H}|X)P^{X}(X)
\\ = & \sum_{\tilde{H}}Q^{\tilde{X}|X,H}(\tilde{X}|X,\tilde{H})P^{X|H}(X|\tilde{H})P^{H}(\tilde{H})
\end{align*}

Only if we marginalize out the hidden states we get to an expression where exchangeability is satisfied for $(X,\tilde{X})$. Otherwise we don't, and therefore $(\tilde{H}, \tilde{X})$ is not a knockoff of $(H,X)$. In SCIP, all the random variables sampled are part of the final knockoff sample.

Our procedure also differs from SCIP insofar it is ``modular'' in each local conjugate conditional. The choice of each conjugate conditional is not unique, and poor choices yield local knockoffs that are too ``close'' to the initial sample and decrease the power of the procedure. The worst option, which is using the feature as its own knockoff (i.e. $Q^{\tilde{i}|i,MB(i)}(\tilde{x}_i|x_{MB(i)},x_i) = \delta_{x_i = \tilde{x}_i}$) still gives valid knockoffs, though discards any possibility for that given feature to be selected. But this is why this procedure is flexible: in cases where a conditional $P^{i|MB(i)}$ has no closed form expression because of complex dependencies, we can locally choose poor conjugates and continue the procedure so that we still obtain valid knockoff samples, which is not possible when running SCIP directly as one has to sample from a complex conditional distribution that is predetermined.

We can analyze how the previous examples make use of this freedom in the choice of the conditional conjugate. In the Gaussian mixture case we could choose $Q^{\tilde{X}|X,H}(\tilde{X}|X,H)=\delta_{X=\tilde{X}}$, in which case our knockoff for $X$ would be $X$ itself, and the knockoff procedure would be powerless. In the HMM setting, after sampling the hidden nodes from the posterior given the observed ones, we could choose to keep those same nodes as local knockoffs instead of sampling a different local knockoff. In this case, the final knockoff $\tilde{X}$ we obtain is different from $X$. However, one can expect this choice to produce knockoffs with lower power, as the knockoff samples will stay ``closer'' to the true samples. The intuition is that if we leverage the knowledge we have about the generative process, we can sample more powerful knockoffs.

For the Hidden Markov Model, sample:

\begin{itemize}
\setlength\itemsep{-0.2em}
\item $H \sim P^{H|X}(\cdot|X)$, we sample the hidden states. Conditionally on $X$ the distribution of $H$ is that of a Markov chain. \item $\tilde{H} \sim Q^{\tilde{H}|H,X}(\cdot|H,X)$ we sample a new knockoff Markov chain via SCIP. 
\item $\tilde{X} \sim Q^{\tilde{X}|X,H}(\cdot|X,\tilde{H})$. However, $Q^{\tilde{X}|X,H}(\tilde{x}|x,h)$ is constructed based on the distribution $P^{X|H}(x|h)$. But because of the structure of the HMM, the observed states are independent conditionally on the hidden states. If the observed states are univariate then we can simplify the conjugate conditional and sample ${\tilde{X} \sim Q^{\tilde{X}|X,H}(\cdot|X,\tilde{H})}\!=\!P^{X|H}(\cdot|\tilde{H})$ (see comments after Definition~\ref{definition:Conjugate}).
\end{itemize} 

\paragraph{Example: Sampling Knockoffs for LDA} As an additional example, we describe how Algorithm~\ref{algorithm:BNKnockoffAlgo} works to generate knockoffs from Latent Dirichlet Allocation (LDA). 
We use the same notation for LDA as defined by \citet{LDA}: the $n$th word $W_{dn}$ in document $d$ is sampled from a multinomial distribution parametrized by $\beta_{Z_{dn}}$, where $Z_{dn}$ corresponds to the topic assignment for $W_{dn}$. Topic assignment is  sampled from a multinomial distribution parametrized by $\theta_d$, the distribution of topics in document $d$. Finally $\theta_d$ for each document is sampled from a Dirichlet distribution with hyperparameter $\alpha$.
\vspace{-3mm}

\begin{enumerate}[leftmargin=*]
\setlength\itemsep{-0.2em}
\item The first step to build knockoffs is to learn the parameters $\alpha, \beta$ of the model, which can be done by variational EM \citep{LDA}. Then, we need to sample the hidden variables $Z_{dn},\theta_d$ given the observed ones $W_{dn}$ : this is an inference problem for which direct computation is intractable, but we can approximate that posterior distribution via standard variational Bayes methods. 
\item Next is to sample local knockoffs. This is exactly analog to one pass of Gibbs sampling over the whole DAG following a topological ordering, except that instead of sampling with respect to the conditional distribution of the node given its Markov blanket, we sample from the conjugate conditional distribution, conditioning on the appropriate variables as explained in Algorithm~(\ref{algorithm:BNKnockoffAlgo}). We sample each $\theta_d$ based on the conjugate conditional distribution. However, as $\theta_d$ is Dirichlet, any given coordinate is determined by the others, so the only possible choice is to set $\tilde{\theta}_d=\theta_d$. Then, as $Z_{dn}$ is univariate, its conjugate conditional simplifies too so that we just sample from the local conditional probability, and so on for $W_{dn}$.
\end{enumerate} 

\subsection{Proof of Theorem~\ref{theorem:BNKnockoffThm}: DAG Knockoff Sampling Procedure}
\label{proof:DAGknockoff}

\begin{proof}
The joint probability distribution can be decomposed as follows by following the sampling steps:

\begin{align}
P(X,\tilde{X}) =  P(X)\prod_{i=1}^m & Q^{\tilde{i}|i,MB(i)} (\tilde{X}_{i} | \tilde{X}_{\{1:i-1\}\cap MB(i)}, \nonumber
\\ & \hspace{5mm} X_{\{i+1:m\}\cap MB(i)},X_i)\label{equation:eq0}
\end{align}

In order to show that $(X_O,\tilde{X}_O)$ is exchangeable, we want to show that if we marginalize out this joint distribution with respect to the hidden states $(X_H,\tilde{X}_H)$, we get an exchangeable distribution. 

We first show that, iterating recursively over all the nodes, and summing over all values of $X_H$, we obtain the following expression.

\begin{align}
 \sum_{X_H}&P(X,\tilde{X})= \nonumber
 \\ & P(\tilde{X}) \prod_{i\in O}Q^{\tilde{i}|i,MB(i)}(X_{i} |  \tilde{X}_{\{1:i-1\}\cap MB(i)},\tilde{X}_i) \label{equation:eq1}
\end{align} 

For simplicity, here we consider discrete random variables, so that marginalizing the joint distribution over $X_i$ means summing over all possible values of $X_i$. Everything stays valid for continuous random variables, replacing sums by integrals. Starting from equation (\ref{equation:eq0}), which corresponds to step 1, we do sequentially $m$ steps to get to (\ref{equation:eq1}).  Suppose that at step $1\leq k \leq m$ we have the following equality where the left-hand term is the product of the right-hand terms:

\begin{align*}
& \sum_{\substack{X_l \\  l \in H, \: l \leq k-1} } \hspace{-3mm} P(X,\tilde{X})= P(\tilde{X}_{1:k-1},X_{k:m}) 
 \\ \times & \! \prod_{i\geq k}\!Q^{\tilde{i}|i,MB(i)}(\tilde{X}_{i} |  \tilde{X}_{\{1:i-1\}\!\cap\! MB(i)},\! X_{\{i+1:m\}\!\cap \!MB(i)},\!X_i)  
 \\ \times \hspace{-2mm} & \prod_{\substack{i \in O\\  i \leq k-1}}\hspace{-2mm}Q^{\tilde{i}|i,MB(i)}(X_{i} |  \tilde{X}_{\{1:i-1\}\cap MB(i)},\tilde{X}_i)
\end{align*} 

The key element is that, by following the topological order, at step $k$ the variable $X_k$ only appears in the joint probability $P(\tilde{X}_{1:k-1},X_{k:m})$  and in the term

\[
Q^{\tilde{k}|k,MB(k)}(\tilde{X}_{k} |  \tilde{X}_{\{1:k-1\}\cap MB(k)},X_{\{k+1:m\}\cap MB(k)},X_k)
\] 
(Notice that, if $i\leq k-1$ corresponds to an observed node, it has no descendents. Therefore the Markov blanket of such node is a subset of the nodes with smaller index/topological ordering). We isolate these two terms and start by writing down the joint probability as a conditional probability. By definition of the Markov blanket, we can simplify the expression of the conditional probability. Then, we obtain two terms that are conjugate in the exchangeable sense.

\begin{align*} 
&P(\tilde{X}_{1:k-1},X_{k:m})Q^{\tilde{k}|k,MB(k)}(\tilde{X}_{k} |  \tilde{X}_{\{1:k-1\}\cap MB(k)},
\\[-3pt] & \hspace{3cm} X_{\{k+1:m\}\cap MB(k)},X_k)
\\[7pt]& = \! P^{k|MB(k)}(X_k|\tilde{X}_{\{1:k-1\}\cap MB(k)},X_{\{k+1:m\}\cap MB(k)})
\\[-3pt] & \times \! P(\tilde{X}_{\{1:k-1\}},X_{\{k+1:m\}})
\\[-3pt]& \times\! Q^{\tilde{k}|k,MB(k)}\!(\tilde{X}_{k} |  \tilde{X}_{\{1:k-1\}\!\cap\! MB(k)},\!X_{\{k+1:m\}\!\cap\! MB(k)},\!X_k)
\\[7pt]&  = \! P^{k|MB(k)}(\tilde{X}_k|\tilde{X}_{\{1:k-1\}\cap MB(k)},X_{\{k+1:m\}\cap MB(k)})
\\[-3pt] & \times \! P(\tilde{X}_{\{1:k-1\}},X_{\{k+1:m\}})
\\[-3pt]& \times \! Q^{\tilde{k}|k,MB(k)}\!(X_{k} |  \tilde{X}_{\{1:k-1\}\!\cap\! MB(k)},\!X_{\{k+1:m\}\!\cap\! MB(k)},\!\tilde{X}_k)
\\[7pt] & = \! P(\tilde{X}_{1:k},X_{k+1:m})Q^{\tilde{k}|k,MB(k)}(X_{k} |  \tilde{X}_{\{1:k-1\}\cap MB(k)},
\\[-3pt] & \hspace{3cm} X_{\{k+1:m\}\cap MB(k)},\tilde{X}_k)
\end{align*}

We swap in the previous expression the two terms and we get the following product:

\begin{align*}
&\sum_{\substack{X_l \\  l \in H, \: l \leq k-1} }\hspace{-4mm} P(X,\tilde{X})=  P(\tilde{X}_{1:k},X_{k+1:m})
\\ \times & \hspace{-2mm} \prod_{i\geq k+1}\hspace{-2mm}Q^{\tilde{i}|i,MB(i)}(\tilde{X}_{i} |  \tilde{X}_{\{1:i-1\}\!\cap\! MB(i)},\! X_{\{i+1:m\}\!\cap \!MB(i)},\!X_i)  
\\   \times \hspace{-2mm} &\prod_{\substack{i \in O\\  i \leq k-1}}\hspace{-2mm}Q^{\tilde{i}|i,MB(i)}(X_{i} |  \tilde{X}_{\{1:i-1\}\cap MB(i)},\tilde{X}_i) 
\\  \times & Q^{\tilde{k}|k,MB(k)}(X_{k} |  \tilde{X}_{\{1:k-1\}\!\cap\! MB(k)},\! X_{\{k+1:m\}\!\cap\! MB(k)},\!\tilde{X}_k)
\end{align*} 

If $k\in H$, then we sum both sides of the equality over $X_k$. But now, $X_k$ only appears in the last term, and summing over it gives $1$. If we reach a node with no descendents, i.e. $k \in O$, then we do not marginalize out. However we have the following simplification:

\begin{align*}
Q^{\tilde{k}|k,M\!B(k)}&(X_{k} |  \tilde{X}_{\{1:k-1\}\cap MB(k)},\! X_{\{k+1:m\}\cap MB(k)},\! \tilde{X}_k)\! 
\\ & =   Q^{\tilde{k}|k,M\!B(k)}(X_{k} |  \tilde{X}_{\{1:k-1\}\cap MB(k)},\tilde{X}_k)
\end{align*}

In both cases, we get to the next step in our recursion. After completing last step, we get to equation (\ref{equation:eq1}). Notice that this expression is exchangeable in $X_O, \tilde{X}_O$, for every assignment of $\tilde{X}_H$. Indeed, for $l \in O$, we have that $\{1:l-1\}\cap MB(l) = MB(l)$, so

\begin{align*}
\sum_{X_H}P(X,\tilde{X})=&P(\tilde{X}_{-l})P^{l|MB(l)}(\tilde{X}_l|\tilde{X}_{MB(l)})
\\  \times &  \, Q^{\tilde{l}|l,MB(l)}( X_l|  \tilde{X}_{\{1:l-1\}\cap MB(l)},\tilde{X}_{l})  
\\ \times & \prod_{\substack{i\in O \\  i\neq l} }    Q^{\tilde{i}|i,MB(i)}(X_i |  \tilde{X}_{\{1:i-1\}\cap MB(i)},\tilde{X}_{i})
\end{align*}

And $(X_l,\tilde{X}_l)$ do not appear in the last product term as two observed nodes cannot be in the Markov blanket of each other. $(X_l,\tilde{X}_l)$ only appear in the conjugate probabilities, therefore the exchangeability in $(X_l,\tilde{X}_l)$ holds. Again, as two observed nodes cannot appear in the Markov blanket of the other, this step can be repeated for different indices $l \in O$, hence the exchangeability of the expression. This symmetry is at fixed values of $\tilde{X}_H$. Therefore, it still holds when we sum over $\tilde{X}_H$. Hence 

\[
P(X_O,\tilde{X}_O) = \sum_{\tilde{X}_H,X_H}P(X,\tilde{X})
\]

satisfies exchangeability.
\end{proof}

\section{Robustness of the Procedure to Model Misspecification}
\label{proof:robustness}

\begin{figure}[ht]
\centering
\includegraphics[width=1\linewidth]{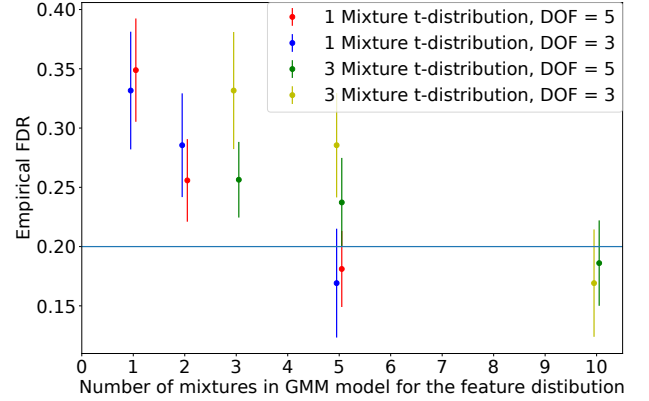}
\caption{\textbf{Fitting a t-distribution with a Mixture of Gaussians} We evaluate the empirical FDR when running the knockoff procedure with a misspecified model. We generate knockoffs by fitting a Gaussian mixture in settings where the features are from a mixture of t-distribution, for different degrees of freedom (DOF).}
\label{fig:t-distribution}
\end{figure}

\begin{figure*}[ht]
\centering
\includegraphics[width=0.8\linewidth]{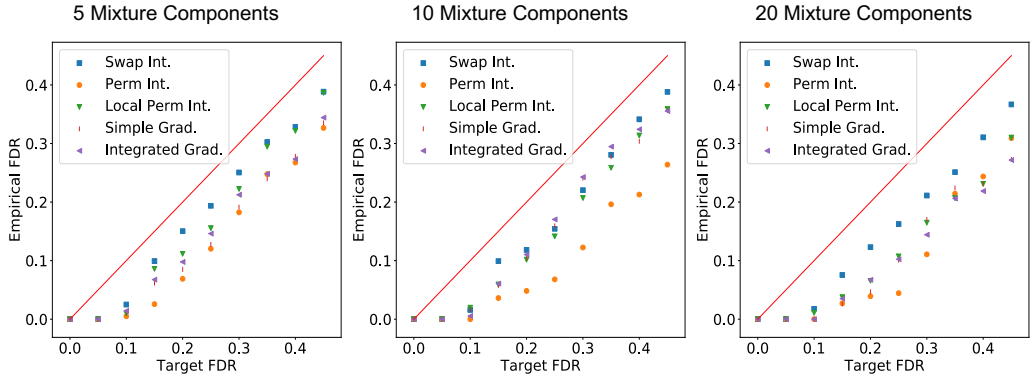}
\caption{\textbf{Empirical FDR for Experiments With Synthetic Features}}
\label{fig:empiricalFDR}
\end{figure*}

As explained when first introducing the \emph{Model-X} knockoff procedure in \citet{KN2}, instead of considering a model for the conditional distribution of $Y|X$, all the assumptions are related to modeling $X$. The \emph{burden of knowledge} shifts from $Y|X$ to $X$. The same way valid p-values rely on assumptions on $Y|X$, (parametric model, noise distribution, asymptotic regime...), valid knockoffs rely on assumptions on the distribution of $X$: mainly, that we can approximate it very well.  

When we generate knockoffs based on a Gaussian mixture model, and more generally a Bayesian network, we assume that these probabilistic models are good approximations for $P_X$, and that they can be properly fitted. This is a very strong assumption, as not only the model we use to represent $X$ may be incorrect, but the estimated parameters of the model depend on the fitting procedure, which sometimes provides only an approximation to the actual distribution encoded by the Bayesian network (as when using Variational Inference). Optimization methods commonly used such as Expectation-Maximization (EM) can also get stuck in local minima. However, the knockoff procedure is remarkably robust when dealing with these issues. Existing theoretical robustness bounds \citep{KN4} are based on controlling the KL-divergence of the model with respect to the true $P_X$. This can help explain why EM in our method, and hopefully other fitting procedures, yield knockoffs that are somehow valid: these methods minimize the KL-divergence of the model with respect to the distribution of $X$.

\section{Synthetic Data Generation and FDR Control}
\label{appendix:synthetic}

As an example, we provide simulations showing the empirical FDR for a mixture of t-distributions in Figure~\ref{fig:t-distribution} (at a fixed target FDR $0.2$), as a function of the number of Gaussian mixtures we use to model the distribution. The conclusion is that, with enough mixtures, the knockoff procedure is able to control FDR, even though we cannot expect to correctly represent any t-distribution through a mixture of Gaussians.

To generate a synthetic data set with $n$ samples, $d$ features, $l$ mixtures and $C$ different classes, we implemented the following steps:

\begin{itemize}[noitemsep, topsep=0pt]
\item We generate random values for the means and covariance matrices (using scikit-learn positive-semidefinite matrix generation function) for each of the $l$ mixtures and the mixture proportions.
\item For each $1\leq i \leq n$ we sample $K_i$ the mixture assignment for sample $i$.
\item We sample $X = (X_{i1},\dots ,X_{id})$ from the Gaussian distribution corresponding to the $K_i$th mixture. 
\item We define $f_c : \mathbb{R}^d \rightarrow \mathbb{R}$  for $c\in\{1,\dots,C\}$ to be 3rd order polynomial functions over the attributes. The coefficients of the polynomial functions are randomly sampled from $\mathcal{N}(0, 1)$.
\item Each sample $X_i$ is labeled by $Y_i = \arg\max_{c\in\{1,\dots,C\}} (f_c(X_i) + \epsilon_{ic})$ where \mbox{$\epsilon_{ic} \sim \mathcal{N}(0,0.1)$} i.i.d.
\end{itemize}
To generate synthetic labels for a real world data set, we go through the same procedure, but without generating the input features. It is crucial to notice that, in these experiments with real data, it is not possible to verify that our method controls FDR, given that we can not obtain new batches of data coming from the same distribution on which to repeat the procedure to get an empirical FDR. These experiments are done for the purpose of power comparison, which remains pertinent even if we only regenerate the synthetic label.

For the experiments where we repeatedly generated the synthetic data $X$, we can verify that our procedure controls FDR by computing an empirical FDR over several runs of the procedure. Figure~\ref{fig:empiricalFDR} plots the empirical FDR vs. the target FDR, whenever $X$ is sampled from different numbers of mixtures, and for all the methods we use to compute feature statistics. 

\section{Selected Features for Real Datasets}
\label{appendix:discovery}

We work on three real world data sets: (1) 17596 randomly sampled participants from the UK Biobank data set \citep{biobank}.
Each individual has 284 phenotype features. (2) Bank Marketing~\citep{bankMarketing} Data Set of  UCI~\citep{UCI}, containing 45211 samples with 10 real-valued features for a binary classification task of bank telemarketing success prediction. (3) Polish bankruptcy dataset~\citep{bankrupcy} of the UCI repository containing 10503 samples with no missing attribute, each with 64 real-valued attributes for a binary task of company bankruptcy prediction.

\paragraph{Disease Prediction} With target FDR=0.3, the following features were selected for the task of Malignant neoplasm of breast with ICD10 code C50:
\begin{itemize}[noitemsep, topsep=0pt]
    \item Duration of walks
    \item Ankle spacing width
    \item Average weekly champagne plus white wine intake
    \item Coffee intake
    \item Number of cigarettes previously smoked daily
    \item Interval between previous point and current one in numeric path (trail \# 1) (related to intelligence question results)
    \item Father's age at death
    \item Longest period of depression
    \item Particulate matter air pollution 
    \item Inverse distance to the nearest road
    \item Number of days/week walked +10 minutes
    \item Mean reticulocyte volume
    \item Length of menstrual cycle
    \item Average weekly spirits intake
\end{itemize}
\paragraph{Bank Marketing Success Prediction} With target FDR=0.3:
\begin{itemize}[noitemsep, topsep=0pt]
\item age
\item duration
\item campaign
\item pdays
\item previous
\item emp.var.rate
\item cons.price.idx
\item cons.conf.idx
\item euribor3m'
\item  nr.employed
\end{itemize}
\paragraph{Bankruptcy Prediction} With target FDR=0.3: 
\begin{itemize}[noitemsep, topsep=0pt]
    \item Gross profit (in 3 years) / total assets
    \item Profit on sales / total assets
    \item Retained earnings / total assets
    \item Gross profit / short-term liabilities
\end{itemize}

\section{Intuition Behind Drawback of Permutation Importance Scores}
\label{appendix:schematic}
\begin{figure}[ht]
\centering
\includegraphics[width=0.5\linewidth]{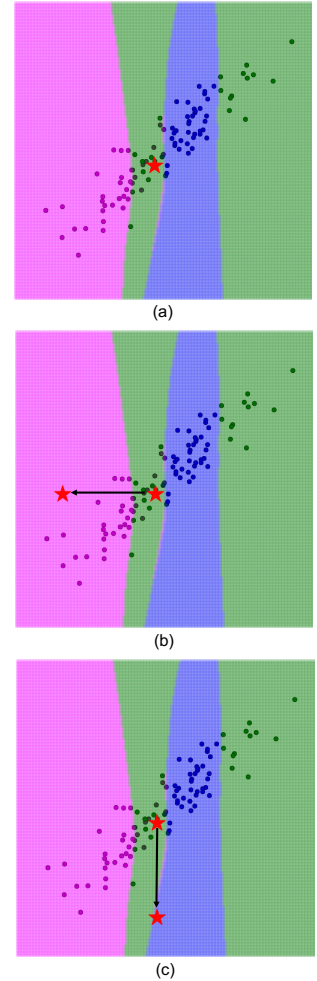}
\caption{\textbf{Drawback of Permutation Method for Importance Score} }
\label{fig:permutation}
\end{figure}
We explain the phenomenon through Figure~\ref{fig:permutation}.

(a) A neural network is trained with a dataset with one feature concatenated with its generated knockoff feature. The horizontal axis corresponds to the original and the vertical axis is the knockoff feature. The decision boundaries of the trained network are displayed. 

(b) Applying shuffling to one of the samples in its original feature will result in an incorrect prediction and therefore a high importance score for the original feature. 

(c) Although the knockoff feature has no effect on prediction, as applying shuffling results in an \emph{off-distribution} fake data point, the predicted label of the fake data point will be incorrect again as it lies in part of the input space that the network has not been trained on. The importance score for both the original and the knockoff feature will both be high, which will result in a small feature statistic and therefore prevent that non-null feature from being selected.

\end{document}